\documentclass{article}
\usepackage{iclr2027_conference, times}

\usepackage{amsmath,amsfonts,bm}

\def\eqref#1{equation~\ref{#1}}

\def\1{\bm{1}}

\DeclareMathAlphabet{\mathsfit}{\encodingdefault}{\sfdefault}{m}{sl}
\SetMathAlphabet{\mathsfit}{bold}{\encodingdefault}{\sfdefault}{bx}{n}

\usepackage{hyperref}              
\usepackage{url}                   
\usepackage{graphicx}              
\usepackage{amsmath, amssymb}      
\usepackage{amsthm}                
\usepackage{algorithm}             
\usepackage{algpseudocode}         
\usepackage{bbm}                   
\usepackage{placeins}              

\theoremstyle{plain}
\newtheorem{theorem}{Theorem}
\newtheorem{lemma}{Lemma}

\newtheorem{conjecture}{Conjecture}

\theoremstyle{definition}
\newtheorem{definition}{Definition}

\newtheorem{remark}{Remark}

\renewenvironment{remark}[1][]
  {\begin{oldremark}[#1]}
  {\nolinebreak\hspace{0.2em}$\square$\end{oldremark}}

\title{Structural Limits of the Information-Theoretic Uncertainty Decomposition}

\author{Jakob Lønborg Christensen \thanks{This work was supported by Danish Data Science Academy, which is funded by the Novo Nordisk Foundation (NNF21SA0069429) and VILLUM FONDEN (40516).} \\
DTU Compute \\
Technical University of Denmark \\
Email: jloch@dtu.dk \\
\And
Christian F. Baumgartner \\
Faculty of Health Sciences and Medicine \\
University of Lucerne \\
Email: christian.baumgartner@unilu.ch \\
\And
Morten Rieger Hannemose \\
DTU Compute \\
Technical University of Denmark \\
Email: mohan@dtu.dk \\
\And
Anders Bjorholm Dahl \\
DTU Compute \\
Technical University of Denmark \\
Email: abda@dtu.dk \\
\AND
Vedrana Andersen Dahl \\
DTU Compute \\
Technical University of Denmark \\
Email: vand@dtu.dk}

\iclrfinalcopy

\begin{document}

\maketitle


\begin{abstract}
    Uncertainty estimation in machine learning typically decomposes uncertainty into aleatoric uncertainty (AU) and epistemic uncertainty (EU) using the standard information-theoretic framework. However, in practice, two critical issues arise: entanglement (AU and EU are highly correlated) and epistemic collapse (EU magnitude shrinks with increasing model capacity). We analyze this framework on a functional level and discover that significant portions of the assumed AU, EU range are infeasible in finite settings, and cannot be attained with any class probabilities. We characterize how this infeasible region scales with the number of classes and Monte Carlo samples $N$ (e.g., from ensembles with $N$ members), revealing it is bounded by $\text{AU} \leq \log(2)/N$. Crucially, the infeasible region's boundary helps explain epistemic collapse: when model confidence is high, $\text{AU} > \text{EU}$ is guaranteed by this fundamental structural limitation. Our findings show that increasing ensemble size mitigates epistemic collapse by reducing the infeasible area. Lastly, we caution against interpreting AU and EU as independent quantities in low AU regimes, since we show they are coupled when $\text{AU} \leq \log(2)/N$. 
\end{abstract}

\FloatBarrier
\section{Introduction}
\label{sec:introduction}
\label{sec:introduction_content}

\begin{figure}[!b]
    \centering
    \includegraphics[width=0.9\textwidth]{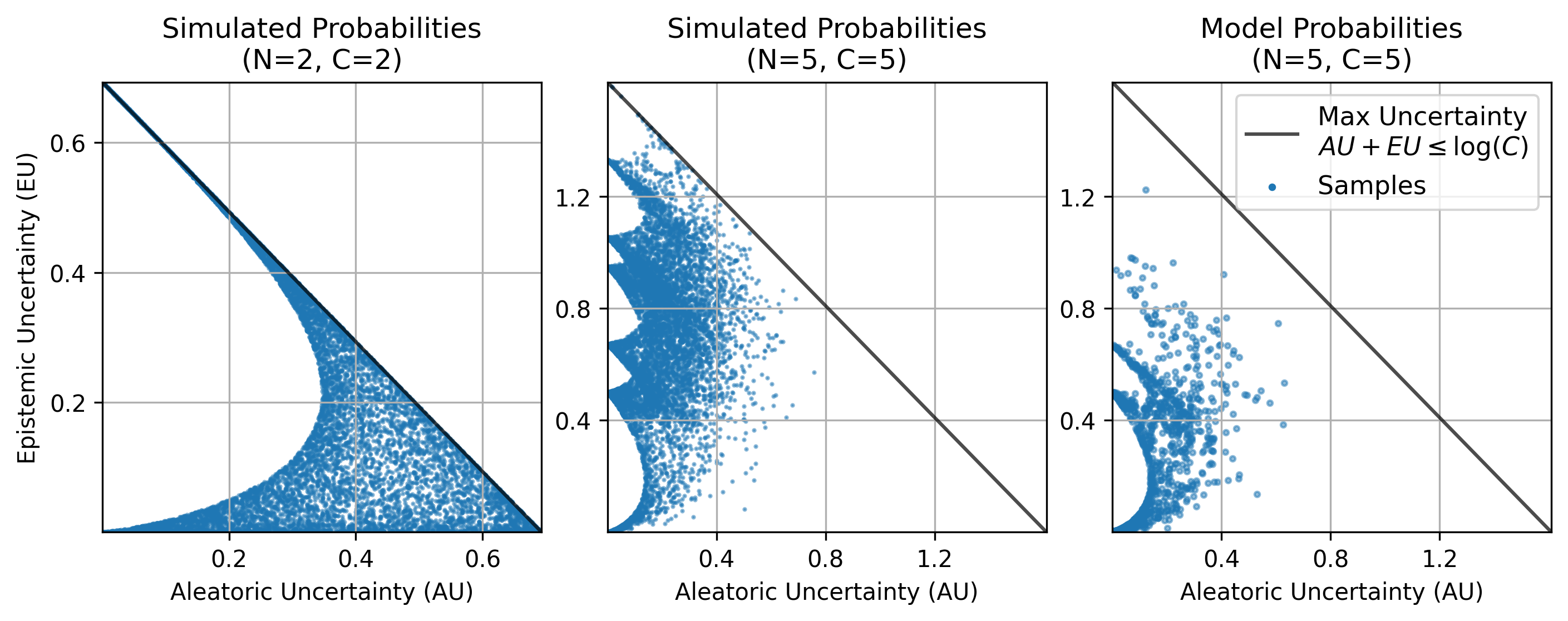}
    \caption{Visualizations of the uncertainty space (Aleatoric Uncertainty vs Epistemic Uncertainty) along with sample uncertainties from probabilities in different settings. The probabilities are simulated in the first two panels, while the last panel shows real probabilities from an ensemble of five models ($N=5$) trained on $C=5$ classes of CIFAR-10.}
    \label{fig:teaser}
\end{figure}

Predictive uncertainty is commonly decomposed into \emph{aleatoric uncertainty} (AU), associated with ambiguity in the data, and \emph{epistemic uncertainty} (EU), associated with limited model knowledge. This distinction is important because it can lead to different decisions in downstream uncertainty estimation tasks (e.g. out-of-distribution detection, calibration, ambiguity modeling).

For a finite set of model predictions, the standard information-theoretic decomposition~\citep{houlsby2011bayesian,kendall_gal,depeweg2018decomposition} defines total uncertainty (TU) as the entropy of the mean predictive distribution, AU as the mean entropy of individual predictions, and EU as the difference between TU and AU. We use \emph{posterior sample} broadly for each predictive distribution in this finite set: it may be generated by an approximate Bayesian method, such as MC dropout or SWAG, or by an ensemble member. Despite its widespread use, studies report strong empirical correlation and entanglement between AU and EU and question whether the measures consistently isolate their intended uncertainty sources~\citep{wimmer2023quantifying,mucsanyi2024benchmarking,kahl2024values,christensen2026rethinking,benchmarking_uncertainty2026}. A related concern is epistemic uncertainty collapse: EU can decrease as model capacity increases, even when limitations in training data or distribution shift persist~\citep{fellaji2024epistemic,kirsch2024implicit}.

We show that the functional dependence of AU and EU partly explains these shortcomings. Figure~\ref{fig:teaser} plots $(\text{AU},\text{EU})$ pairs for synthetic probabilities drawn from correlated Gaussian logits for two scenarios and for predictions from a trained ensemble. The points follow a parametric form that results directly from the information theoretic decomposition. Moreover, it can be seen that depending on the choice of number of classes $C$ and number of posterior samples $N$, certain areas of the plane are \textit{infeasible}. Further details of the simulation are provided in Appendix~\ref{app:logit_simulation}. These observations motivate a systematic analysis of the uncertainty space. 

Our contributions are:

\begin{itemize}
    \item We establish bounds on the attainable $(\text{AU},\text{TU})$ space for finite sets of posterior samples, exposing regions that cannot be realized. Specifically, we show the infeasible region evolves with the posterior sample count $N$ and number of classes $C$, and we derive a $AU\leq\log(2)/N$ upper bound on it.
    \item We support our theoretical analysis, showing it adheres to numerical experiments both trained and simulated. We train ensembles and Monte Carlo dropout models on CIFAR-10 and MNIST. 
    \item Our work shines light on the dynamics behind epistemic collapse. The fraction of epistemic uncertainty grows with $N$, while the epistemic uncertainty fraction shrinks for larger (more confident) models. Both can be explained by the first curve of the infeasible boundary.
\end{itemize}

\section{Background}
\label{sec:background}
\label{sec:background_content}

\subsection{Information-theoretic decomposition}

In classification, the standard approach decomposes predictive (or total) uncertainty (TU) into AU and EU using an information-theoretic identity~\cite{houlsby2011bayesian,depeweg2018decomposition}:

\begin{equation}
\text{AU} = \underbrace{\mathbb{E}_\theta\!\left[ H\!\left(p_\theta\right) \right]}_{\text{\scriptsize Expected Entropy}},
\quad\quad
\text{TU} = \underbrace{H\!\left(\mathbb{E}_\theta\!\left[p_\theta\right]\right)}_{\text{\scriptsize Predictive Entropy}},
\quad\quad
\text{EU} = \underbrace{\mathbb{I}\!\left(Y; \theta \mid x, \mathcal{D}\right)}_{\text{\scriptsize Mutual Information}} = \text{TU} - \text{AU}.
\end{equation}
Here, $H$ denotes Shannon entropy, $\theta$ the model parameters, $p_\theta$ the predictive distribution given parameters $\theta$, $Y$ the label, $x$ the input, and $\mathcal{D}$ the training data.

For finite-sample analysis, we represent the $N$ equally weighted posterior samples for an input as $P=(p_1,\ldots,p_N)\in\mathbb{R}^{N\times C}$, where each $p_i\in\Delta^{C-1}$ is a probability vector on the $C$-class probability simplex. The corresponding empirical mean prediction is $q=\frac{1}{N}\sum_{i=1}^N p_i$, giving $\text{AU}=\frac{1}{N}\sum_{i=1}^N H(p_i)$, $\text{TU}=H(q)$, and $\text{EU}=\text{TU}-\text{AU}$. Equivalently, EU is the mutual information between the uniformly sampled posterior sample index and the predicted class. These finite-sample quantities depend on the predictions in $P$, not on how they were generated.

\subsection{Related Work}
The information-theoretic decomposition has received increasing scrutiny. \citet{wimmer2023quantifying} analyze whether conditional entropy and mutual information have the properties expected of AU and EU. Their theoretical and empirical results identify discrepancies between the measures and the intended uncertainty sources, raising questions about whether the additive decomposition separates AU and EU faithfully.

\citet{mucsanyi2024benchmarking} report strong correlations between estimated AU and EU across uncertainty-estimation methods for image classification. They discuss this association as evidence of limited disentanglement and emphasize task-specific evaluation. For semantic segmentation, \citet{kahl2024values} introduce ValUES, a framework for evaluating uncertainty estimates under ambiguity and distribution shift. Their results show that disentanglement in simulated data does not consistently transfer to real data. EU measures can also capture annotator disagreement and successful disentanglement depends on the data and evaluation setting. Other studies have highlighted similar challenges in disentangling different sources of uncertainty, emphasizing the need for careful interpretation and evaluation of information-theoretic measures \citep{benchmarking_uncertainty2026,christensen2026rethinking}.

A further concern is epistemic uncertainty collapse, in which estimated EU decreases despite persistent limitations in training information. \citet{fellaji2024epistemic} describe an “epistemic uncertainty hole” in which EU becomes small as model size increases, including with limited training data, and examine its effect on out-of-distribution detection. \citet{kirsch2024implicit} proposes implicit ensembling as a possible mechanism: averaging within larger networks may reduce disagreement between ensemble members. The authors provide experimental evidence based on ensembles of ensembles and wider networks that is consistent with this hypothesis.

Closely related mathematical work studies entropy-bounded information maximization. \citet{ebrahimi2024minimum} maximize mutual information between a source variable with a prescribed distribution and an output variable subject to an output-entropy constraint. Taking the source to be a uniformly sampled prediction index and the output to be the predicted class identifies these quantities with EU and TU, respectively. At fixed TU, maximizing EU is equivalent to minimizing AU. They characterize optimal perturbations near deterministic mappings and propose an envelope construction whose global optimality remains conjectural. Our analysis examines the attainable uncertainty region for equally weighted finite sets of posterior samples, including its dependence on sample count and class count, and connects this geometry to epistemic uncertainty collapse.

\section{Theory}
\label{sec:method}
\label{sec:method_content}

\begin{theorem}[Bounds on the range of ($\text{AU}$, $\text{TU}$) pairs]
\label{th:au_tu_bounds}
The range of valid ($\text{AU}$, $\text{TU}$) pairs is bounded by the following inequalities:
\begin{align}
    \text{TU} &\geq \text{AU}, \\
    \text{EU} = \text{TU} - \text{AU} &\leq \log(\min(C, N)), \\
    \text{TU} &\leq \log(C)
\end{align}
\end{theorem}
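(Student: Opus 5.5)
We prove the three inequalities separately. Throughout we use the finite-sample setting of Section~\ref{sec:background}: $P=(p_1,\ldots,p_N)$ with each $p_i\in\Delta^{C-1}$ and mean $q=\frac1N\sum_i p_i$. We also use the interpretation of EU as the mutual information $I(J;Y)$. Here $J$ is uniform on $\{1,\ldots,N\}$, and conditionally on $J=i$ the label $Y$ is drawn from $p_i$. The marginal of $Y$ is then exactly $q$, so $H(Y)=\text{TU}$ and $H(Y\mid J)=\frac1N\sum_i H(p_i)=\text{AU}$. Consequently $\text{EU}=H(Y)-H(Y\mid J)=I(J;Y)$, and this identity carries the first two bounds.

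\emph{First inequality.} The bound $\text{TU}\geq\text{AU}$ is equivalent to $\text{EU}=I(J;Y)\geq 0$, which is nonnegativity of mutual information. Alternatively, it follows from concavity of Shannon entropy via Jensen's inequality: $H\bigl(\frac1N\sum_i p_i\bigr)\geq \frac1N\sum_i H(p_i)$.

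\emph{Second inequality.} We use the symmetry of mutual information and bound it in two ways.
\begin{itemize}
    \item Since $H(Y\mid J)\geq 0$, we have $I(J;Y)\leq H(Y)$. Because $Y$ takes at most $C$ values, $H(Y)\leq\log C$.
    \item Since $H(J\mid Y)\geq 0$, we have $I(J;Y)=H(J)-H(J\mid Y)\leq H(J)=\log N$, because $J$ is uniform on $N$ values.
\end{itemize}
Combining the two gives $\text{EU}\leq\min(\log C,\log N)=\log(\min(C,N))$, using monotonicity of $\log$. If we prefer to avoid the random-variable language for the $\log N$ bound, we can expand directly: $I(J;Y)=\frac1N\sum_i \mathrm{KL}(p_i\,\|\,q)$. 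Since $q\geq p_i/N$ componentwise, each KL term is at most $\sum_c p_{ic}\log N=\log N$.

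\emph{Third inequality.} The bound $\text{TU}=H(q)\leq\log C$ holds because $q\in\Delta^{C-1}$ and the uniform distribution maximizes entropy. This is again Jensen's inequality applied to $\log$.

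None of the steps is a real obstacle. The only point needing care is justifying the identification of $\text{EU}$ with $I(J;Y)$ (equivalently, the KL-average formula) so that the $\log N$ bound is legitimate. This follows from a short computation: $H(q)-\frac1N\sum_i H(p_i)=\frac1N\sum_i\sum_c p_{ic}\log(p_{ic}/q_c)$.
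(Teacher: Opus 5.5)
Your proof is correct and takes essentially the same route as the paper. Both identify EU with the mutual information $I(J;Y)$ between the uniformly drawn sample index and the label, get $\text{TU}\geq\text{AU}$ from nonnegativity, bound $I(J;Y)$ by $\log(\min(C,N))$ via the two supports, and get $\text{TU}\leq\log C$ from the maximum-entropy property. You make explicit what the paper states in one line each: the check that $H(Y\mid J)=\text{AU}$, and the separate bounds $I(J;Y)\leq H(Y)\leq\log C$ and $I(J;Y)\leq H(J)=\log N$. Your KL-based verification of the $\log N$ bound is also valid.
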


\textbf{Proof.} $\text{TU} \geq \text{AU}$ follows from the non-negativity of mutual information. EU is the mutual information between posterior sample identity and the predicted label, so it is bounded by the logarithm of the smaller support, $\log(\min(C,N))$. Finally, the entropy of a categorical distribution over $C$ classes is at most $\log(C)$, which gives the third inequality. \qed

\begin{figure}[t]
    \centering
    \includegraphics[width=1.0\textwidth]{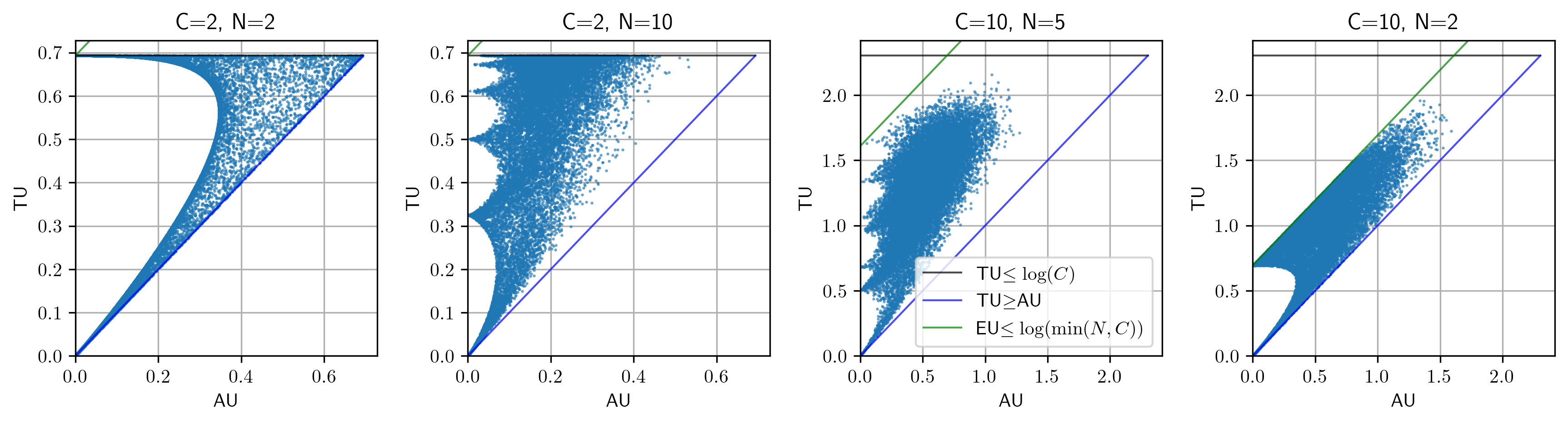}
    \caption{Simulated $(\text{AU},\text{TU})$ values for different posterior sample counts $N$ and class counts $C$, with the bounds from Theorem~\ref{th:au_tu_bounds}.}
    \label{fig:triangles1}
\end{figure}

\begin{theorem}[Valid $\text{TU}$ values when $\text{AU}=0$]
\label{th:au_zero}
When $\text{AU}=0$, the valid $(\text{AU},\text{TU})$ pairs correspond to integer partitions of $N$ into at most $C$ parts. Dividing each partition vector by $N$ gives a possible mean prediction $q$ and hence an attainable TU value. The other zero-AU pairs are infeasible.
\end{theorem}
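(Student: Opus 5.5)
The plan is to characterize exactly which matrices $P$ give $\text{AU}=0$, and then read off the mean prediction $q$. Since each $H(p_i)\geq 0$ and $\text{AU}=\frac{1}{N}\sum_{i=1}^N H(p_i)$, the condition $\text{AU}=0$ holds if and only if $H(p_i)=0$ for every $i$. Shannon entropy on $\Delta^{C-1}$ vanishes exactly at the vertices. This follows because $-t\log t>0$ for $t\in(0,1)$, so any vector with a coordinate strictly between $0$ and $1$ has positive entropy. Hence $\text{AU}=0$ forces each posterior sample to be a one-hot vector $p_i=e_{c_i}$ for some class $c_i\in\{1,\ldots,C\}$.

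Next we compute $q$. Let $n_c=|\{i: c_i=c\}|$ be the number of samples assigned to class $c$. Then $q=\frac{1}{N}(n_1,\ldots,n_C)$, where the $n_c$ are nonnegative integers summing to $N$. Because entropy is invariant under permuting coordinates, $\text{TU}=H(q)$ depends only on the multiset of nonzero counts $\{n_c: n_c>0\}$. This multiset is an integer partition of $N$ with at most $C$ parts, since at most $C$ classes can be used.

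Conversely, take any partition $\lambda=(\lambda_1,\ldots,\lambda_k)$ of $N$ with $k\leq\min(C,N)$. Assign $\lambda_j$ of the $N$ samples to the one-hot vector $e_j$ for $j\leq k$. This gives $\text{AU}=0$ and $q=\lambda/N$ padded with zeros, so $\text{TU}=-\sum_j \frac{\lambda_j}{N}\log\frac{\lambda_j}{N}$ is attained. Combining the two directions, the set of attainable $\text{TU}$ values on the line $\text{AU}=0$ is exactly $\{H(\lambda/N)\}$ as $\lambda$ ranges over these partitions. This is a finite set. Every other point $(0,t)$ with $0\le t\le\log\min(C,N)$, which Theorem~\ref{th:au_tu_bounds} would otherwise allow, is therefore infeasible.

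The only mildly delicate step is the first one: justifying that zero entropy characterizes the vertices, and that $\text{AU}=0$ requires every term to vanish. Both follow from nonnegativity, so I expect no real obstacle. It is worth noting that distinct partitions may yield the same TU value, so the correspondence is to a set of values indexed by partitions rather than a bijection. I would state it that way to keep the claim precise.
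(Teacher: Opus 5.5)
Your proposal is correct and follows the same argument as the paper. $\text{AU}=0$ forces every $p_i$ to be one-hot, $q$ is the class-count vector divided by $N$, and permutation invariance of entropy makes TU depend only on the sorted counts, i.e., on a partition of $N$ into at most $C$ parts. Your explicit converse construction and your remark that distinct partitions can share a TU value (e.g., $(2,2,2,2)$ and $(4,1,1,1,1)$ for $N=8$, $C\geq 5$, both giving $\log 4$) state precisely what the paper leaves implicit.
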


\textbf{Proof.} If $\text{AU}(P)=\frac{1}{N}\sum_{i=1}^{N}H(p_i)=0$, each prediction $p_i$ is deterministic. There are $C^N$ assignments of $N$ posterior samples to $C$ classes, but TU depends only on the mean prediction $q=\frac{1}{N}\sum_i p_i$. Since entropy is invariant to class permutations, assignments with the same sorted class counts yield the same TU. These count vectors are the integer partitions of $N$ into at most $C$ parts and the zero-AU pairs not generated by such partitions are infeasible. Figure~\ref{fig:triangles2} shows these points. For example, with $N=5$ and $C=3$, the partition $(3,2,0)$ gives $q=(3,2,0)/5=(0.6,0.4,0)$ (three posterior samples predict $(1,0,0)$ and two predict $(0,1,0)$). \qed

To characterize the infeasible region boundary, we need the following definition and conjecture.
\begin{figure}[t]
    \centering
    \includegraphics[width=1.0\textwidth]{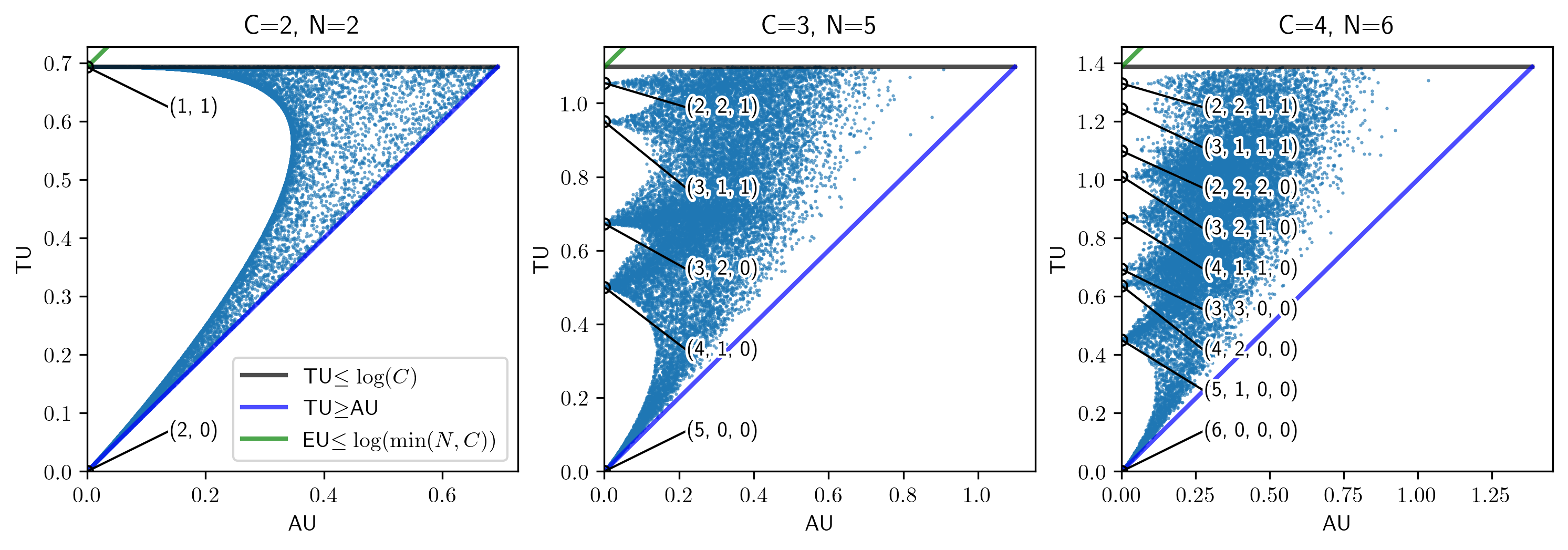}
    \caption{Simulated $(\text{AU},\text{TU})$ values for different posterior sample counts $N$ and class counts $C$. The points with $\text{AU}=0$ are highlighted and labeled by their class counts $Nq$.}
    \label{fig:triangles2}
\end{figure}

\begin{definition}[Deterministic configurations with at most one non-deterministic posterior sample, $\mathcal{D}$]
\label{def:probability_simplex}
We define the set of posterior sample configurations $\mathcal{D}$ such that
\begin{equation}
\begin{aligned}
\mathcal{D} &= \{(p_1, p_2, \ldots, p_N) \in \mathbb{R}^{N \times C} : p_i \in \Delta^{C-1}\  \forall i, \\
     &\bigg(\sum_{i=1}^{N}\sum_{j=1}^{C} \mathbbm{1}(0<p_{i,j} < 1)\bigg) \leq 2 \},
\end{aligned}
\end{equation}
where $\mathbbm{1}(\cdot)$ is the indicator function and $\Delta^{C-1}$ is the $C$-class probability simplex. Thus, $\mathcal{D}$ contains configurations with $N-1$ deterministic posterior samples and at most one posterior sample on an edge between two simplex vertices.
\end{definition}

\begin{conjecture}[Infeasible boundary]
\label{conj:infeasible_boundary}
Let $\text{TU}_{max}^\mathcal{D}=\max_{P \in \mathcal{D}}(\text{TU}(P))$ denote the maximum $\text{TU}$ value attainable by points in $\mathcal{D}$. Then, for a fixed total uncertainty $\text{TU}<\text{TU}_{max}^\mathcal{D}$, only points in $\mathcal{D}$ can minimize the aleatoric uncertainty $\text{AU}$.
\end{conjecture}

For $\text{TU}<\text{TU}_{\max}^{\mathcal{D}}$, the conjecture implies that the infeasible boundary is the lower envelope of the image of $\mathcal{D}$ under the $(\text{AU},\text{TU})$ mapping. This image is connected because $\mathcal{D}$ is connected and the mapping is continuous. It contains $(0,0)$, attained when all posterior samples deterministically predict the same class, and therefore the boundary connects to the origin.

Numerical simulations and empirical results support the conjecture, and it is closely related to the conjectured global optimality of the envelope construction in \citet[Section~3.2]{ebrahimi2024minimum}. We prove it for $C=2$, where the image of $\mathcal{D}$ does not branch in the $(\text{AU},\text{TU})$ plane (Figure~\ref{fig:boundary_shapes}). The general case remains open.

\textbf{Proof (special case).} Let $C=2$ and $N$ be arbitrary. It suffices to show that every $P\notin\mathcal{D}$ is suboptimal for minimizing AU at fixed TU. Such a $P$ has at least two non-deterministic predictions, say those of posterior samples $i$ and $k$. Without loss of generality, let $p_{i,1}=\min(p_{i,1},p_{i,2},p_{k,1},p_{k,2})$. For sufficiently small $\epsilon>0$, transfer probability mass between these predictions as follows:
\begin{equation}
    \tilde{P}_{[i,k],[1,2]} =
    \begin{pmatrix}
    \tilde{p}_{i,1} & \tilde{p}_{i,2} \\
    \tilde{p}_{k,1} & \tilde{p}_{k,2}
    \end{pmatrix}
    =
    \begin{pmatrix}
    p_{i,1} & p_{i,2} \\
    p_{k,1} & p_{k,2}
    \end{pmatrix}
    +
    \begin{pmatrix}
    -\epsilon & \epsilon \\
    \epsilon & -\epsilon
    \end{pmatrix},
\end{equation}
The column sums are unchanged, so $q$ and TU remain fixed. Let $h$ denote binary entropy. The perturbation changes the two relevant entropy terms to $h(p_{i,1}-\epsilon)+h(p_{k,1}+\epsilon)$, whose derivative at $\epsilon=0$ is $-h'(p_{i,1})+h'(p_{k,1})$. Since $p_{i,1}\leq p_{k,1}$ and $h'$ is decreasing, this derivative is non-positive.
Strict concavity then implies that a sufficiently small positive perturbation decreases the summed entropy (Figure~\ref{fig:binary_entropy_shift}). The prediction space is compact and the $(\text{AU},\text{TU})$ mapping is continuous, so a minimizer exists and it lies in $\mathcal{D}$. \qed

\begin{figure}[ht]
\centering
\includegraphics[width=0.4\textwidth]{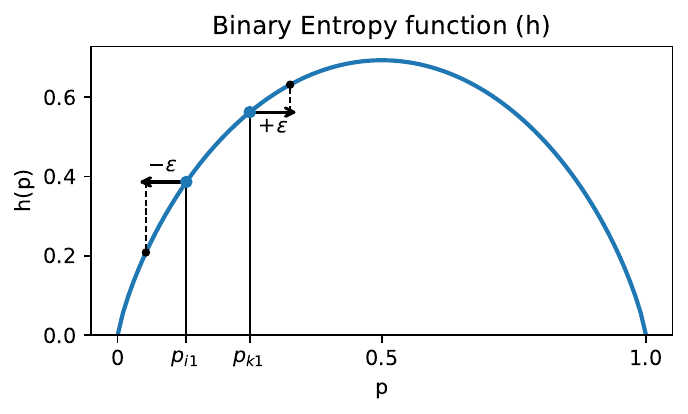}
\caption{Binary entropy at the original and perturbed prediction probabilities.}
\label{fig:binary_entropy_shift}
\end{figure}

\begin{figure}[t]
    \centering
    \includegraphics[width=1.0\textwidth]{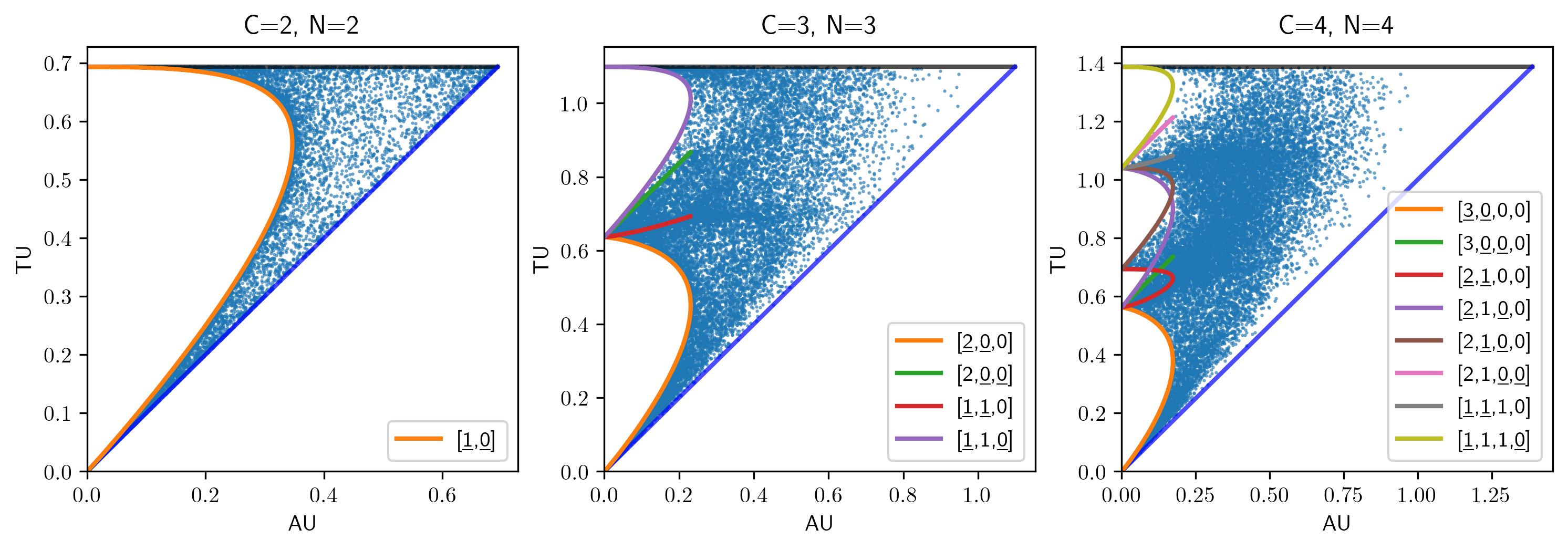}
    \caption{Simulated $(\text{AU},\text{TU})$ values for different posterior sample counts $N$ and class counts $C$. Curves from $\mathcal{D}$ are labeled by deterministic class counts. The underlined entries mark the two classes supported by the non-deterministic posterior sample. For example, $[\underline{2},\underline{0},0]$ connects the count vectors $[3,0,0]$ and $[2,1,0]$.}
    \label{fig:triangles3}
\end{figure}

\begin{figure}[ht]
\centering
\includegraphics[width=1.0\textwidth]{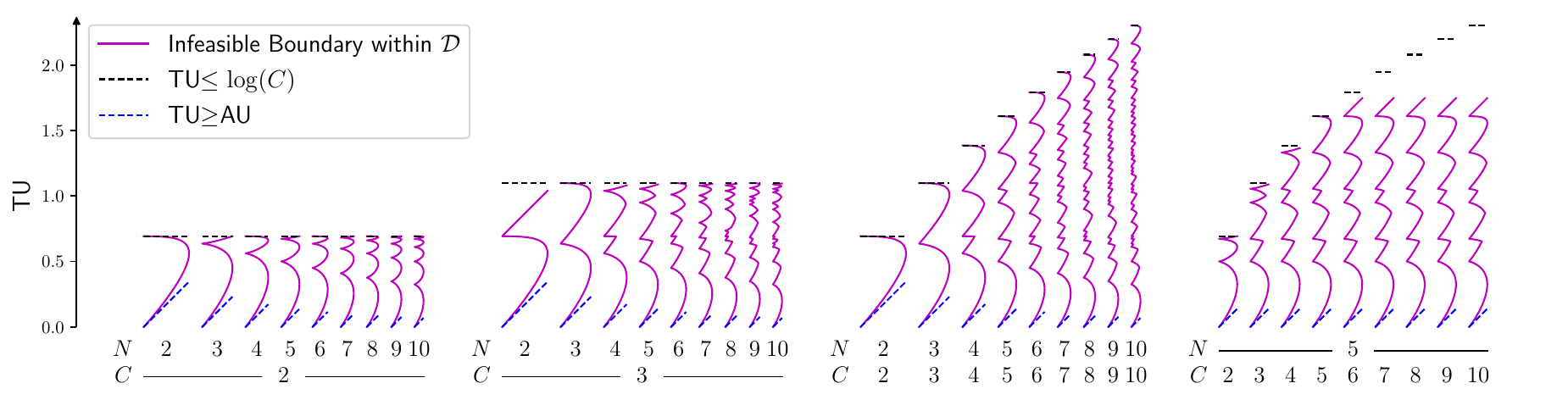}
\caption{Infeasible-boundary shapes, shown as lower envelopes of interpolating curves, for different posterior sample counts $N$ and class counts $C$.}
\label{fig:boundary_shapes}
\end{figure}


Under the conjecture, the infeasible boundary is the lower envelope of curves obtained by interpolating between zero-AU points. Each point from the curves/simplex edges corresponds to a posterior sample configuration in $\mathcal{D}$, and the infeasible region lies below this envelope (Figure~\ref{fig:triangles3}). To be precise, we define the \emph{infeasible region} as the set pairs of $(\text{AU},\text{TU})$ or $(\text{AU},\text{EU})$ that cannot be realized for a finite set of posterior samples but can be realized in the limit of an infinite number of posterior samples.

As an example of the conjecture, consider the base case where $N=C=2$. The $\text{AU}=0$ points are $(0,0)$ and $(0,\log 2)$, with mean predictions $q=(1,0)$ and $q=(1/2,1/2)$. Varying one posterior sample's prediction between the two classes gives $p_1=(1,0)$, $p_2=(1-s,s)$, and $q=(1-s/2,s/2)$ with $s\in[0,1]$. The corresponding uncertainties are
\begin{align} \label{eq:au_tu_s}
    \text{AU}(s) &= \frac{1}{2} H([1,0]) + \frac{1}{2} H([1-s, s]) =\frac{1}{2} H([1-s, s]) \nonumber \\
                 &=   -\frac{1}{2} \left((1-s) \log(1-s) +  s \log(s)\right), \\
    \text{TU}(s) &= H(q) = H([1-s/2, s/2]) = - (1-s/2) \log(1-s/2) - (s/2) \log(s/2).
\end{align}
The resulting curve is shown in Figure~\ref{fig:triangles3} (left). This example only has one boundary curve, but the number of boundary curves increases with $N$ and $C$. The expression for $\text{AU}$ generalizes with $1/2$ replaced by $1/N$, which we state in the following theorem.

\begin{theorem}[AU component of simplex edges]
\label{thm:au_infeasible_boundary}
The $\text{AU}$ component of the simplex edges can be expressed as
\begin{equation}
\text{AU}(s) = -\frac{1}{N} \left((1-s)\log(1-s) + s \log(s)\right), \quad
\end{equation}
and therefore $\text{AU}\leq\log(2)/N$ for all simplex edges.
\end{theorem}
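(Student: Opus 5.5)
The plan is to work directly from Definition~\ref{def:probability_simplex}. A configuration $P\in\mathcal{D}$ lying on a simplex edge has $N-1$ deterministic posterior samples, each a vertex of $\Delta^{C-1}$, and at most one posterior sample with exactly two entries strictly between $0$ and $1$. The first step is to fix that non-deterministic sample, say $p_k$, and note that its two nonzero entries must sum to one. We can therefore parametrize $p_k$ (after a permutation of classes, which does not change entropy) as $p_k=(1-s,s,0,\ldots,0)$ with $s\in[0,1]$. The endpoints $s=0$ and $s=1$ recover the zero-AU configurations of Theorem~\ref{th:au_zero}, so each edge interpolates between two integer-partition points.

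Next I would compute AU. Every deterministic $p_i$ is a vertex, so $H(p_i)=0$ by the convention $0\log 0=0$. Hence
\begin{equation*}
\text{AU}(s)=\frac{1}{N}\sum_{i=1}^N H(p_i)=\frac{1}{N}H(p_k)=-\frac{1}{N}\bigl((1-s)\log(1-s)+s\log(s)\bigr),
\end{equation*}
which is the claimed formula. The computation does not depend on which classes the deterministic samples select, nor on which pair of classes the edge spans. This is why a single expression covers all curves in Figure~\ref{fig:triangles3}, even though their TU components differ. This generalizes the $N=C=2$ computation in \eqref{eq:au_tu_s}.

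Finally, the bound follows from $\text{AU}(s)=h(s)/N$, where $h$ is the binary entropy. The function $h$ is strictly concave on $[0,1]$ and symmetric about $1/2$, and $h'(s)=\log((1-s)/s)$ vanishes only at $s=1/2$. So $h$ attains its maximum $h(1/2)=\log 2$ there, and $\text{AU}(s)\le \log(2)/N$ for every point on every edge. Equality holds exactly when the non-deterministic sample is $(1/2,1/2)$ on its two classes.

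I see no real obstacle here. The only point needing care is the bookkeeping: configurations in $\mathcal{D}$ with zero non-deterministic entries are just the $s\in\{0,1\}$ endpoints. A configuration cannot have exactly one entry strictly between $0$ and $1$, because the sample's entries would then fail to sum to one. So "at most two" indeed forces a single sample supported on two classes, and no configuration in $\mathcal{D}$ has AU above $\log(2)/N$. Under Conjecture~\ref{conj:infeasible_boundary}, this also bounds the AU extent of the infeasible boundary, as stated in the abstract.
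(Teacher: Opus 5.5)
Your proposal is correct and follows the paper's argument. The deterministic samples contribute zero entropy, so AU reduces to $h(s)/N$ (the $N=C=2$ computation with $1/2$ replaced by $1/N$), and maximizing the binary entropy at $s=1/2$ gives $\log(2)/N$; your extra observation that no sample can have exactly one entry strictly between $0$ and $1$ makes explicit why $\mathcal{D}$ forces a single two-class sample, which the paper leaves implicit.
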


\textbf{Proof.} The only change with respect to the AU calculation in \eqref{eq:au_tu_s} is the replacement of $1/2$ by $1/N$. The maximum of $\text{AU}(s)$ occurs at $s=1/2$, giving $\text{AU}_{\max} = -\frac{1}{N} \left((1/2)\ln(1/2) + (1/2) \ln(1/2)\right) = \frac{\ln(2)}{N}$, which establishes the upper bound. \qed

\begin{theorem}[Lowest $\text{TU}$ boundary curve] 
\label{thm:first_bubble}
The curve connecting $Nq^{(N)}=[N,0,0,\ldots]$ and $Nq^{(N-1)}=[N-1,1,0,\ldots]$ attains the lowest TU values among the infeasible-boundary curves. Its interior TU values are not attained by any other such curve. We call it the first bubble.
\end{theorem}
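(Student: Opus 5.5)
We work with the curves that make up $\mathcal{D}$ in Definition~\ref{def:probability_simplex}. Each such curve is fixed by a deterministic count vector $m=(m_1,\ldots,m_C)$ for the $N-1$ deterministic posterior samples, together with a pair of classes $(a,b)$ on whose edge the remaining sample $(1-s)e_a+s\,e_b$ lies. Along the curve, $\text{AU}(s)$ is given by Theorem~\ref{thm:au_infeasible_boundary}, and
\[
Nq(s)=m+(1-s)e_a+s\,e_b .
\]
Hence every curve has the same AU profile. Comparing curves at equal AU therefore reduces to comparing their TU values at the same $s\in[0,1]$; by the symmetry $s\leftrightarrow 1-s$ of $\text{AU}(s)$ we may take $s\in[0,1/2]$.

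The first bubble is $m=(N-1,0,\ldots)$, $a=1$, $b=2$. Along it, $Nq=(N-s,s,0,\ldots)$ and $\text{TU}_1(s)=h(s/N)$, which increases from $0$ to $\log N-\frac{N-1}{N}\log(N-1)$ (the TU of the count vector $[N-1,1,0,\ldots]$).

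The plan is to show that every other curve has $\text{TU}(s)>h(s/N)$ for $s\in(0,1)$, and in fact that it stays at or above the endpoint value $h(1/N)$ except on its own portion near $s=0$ or $s=1$. Consider any other curve. If it does not contain the origin configuration $[N,0,\ldots]$, then both of its endpoints $q(0)$ and $q(1)$ are integer count vectors other than $[N,0,\ldots]$. Every mean $q(s)$ on it is a convex combination of these endpoints. By concavity of entropy, $H(q(s))\ge\min(H(q(0)),H(q(1)))$. Any count vector other than $[N,0,\ldots]$ has maximum entry at most $N-1$, and it is majorized-dominated by $[N-1,1,0,\ldots]$. Schur-concavity then gives $H\ge h(1/N)$. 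This bound exceeds $h(s/N)$ for every interior $s$, which settles both claims for these curves.

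The remaining curves contain $[N,0,\ldots]$ as an endpoint. The only such curves are $m=(N-1,0,\ldots)$ with $a=1$ and some $b\ne1$, or curves reached by permuting classes. All of these give $Nq=(N-s,s)$ up to class relabeling, so they are the first bubble itself. A curve with $a,b\ne1$ and $m=(N-1,0,\ldots)$ has $Nq=(N-1,1-s,s,\ldots)$. This vector is majorized by $(N-1,1,0,\ldots)$, so its TU is at least $h(1/N)$. Since it never reaches AU $=0$ at the origin, it is covered by the previous argument.

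The main obstacle is the tie case. An AU value is attained at both $s$ and $1-s$, and another curve could tie with $h(1/N)$ at its endpoint $[N-1,1,0,\ldots]$. We handle this by noting that the statement concerns \emph{interior} TU values. These lie in $(0,h(1/N))$, strictly below the lower bound $h(1/N)$ for every other curve, and ties occur only at the shared endpoint. For the case-based majorization argument, we will check carefully that every count vector other than $[N,0,\ldots]$ is majorized by $[N-1,1,0,\ldots]$. This holds because sorting the vector gives first entry at most $N-1$ and partial sums at most $N$.
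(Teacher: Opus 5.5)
Your proof is correct and follows the same route as the paper. Concavity of entropy along each curve puts the minimum TU at an endpoint (the paper's appendix lemma), and every deterministic count vector other than $[N,0,\ldots]$ has TU at least that of $[N-1,1,0,\ldots]$; your majorization argument and your check that any curve ending at $[N,0,\ldots]$ is a relabeling of the first bubble make rigorous two steps the paper only asserts.
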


\textbf{Proof.} Each boundary curve connects two fully deterministic configurations. From $q^{(N)}$, the smallest increase in TU occurs when one posterior sample changes its prediction to a second class, yielding $q^{(N-1)}$. Thus, $\text{TU}(q^{(N)})<\text{TU}(q^{(N-1)})<\text{TU}(\hat q)$ for any other deterministic configuration $\hat q$. The lemma of Appendix~\ref{app:lemma} shows that TU attains its minimum along each curve at an endpoint. Therefore, every other curve has minimum a TU at of least $\text{TU}(q^{(N-1)})$. \qed


\section{Experiments}
\label{sec:experiments}
\label{sec:experiments_content}

\subsection{Experimental Setup}
To evaluate the impact of ensemble size and network capacity on uncertainty estimation, we conduct a series of experiments varying these factors systematically. Our aim is to understand how these factors relate to epistemic collapse, i.e., the reduction of epistemic uncertainty, commonly observed when model capacity grows. 

Our theory suggests that when AU is low, there is a direct dependence between AU and EU since some pairs of (AU,EU) are infeasible. This dependence could relate to the observed entanglement in uncertainty estimation. For this, we track performance for out-of-distribution (OOD) detection as it allows us to assess the entanglement between AU and EU in practice. For OOD detection, we expect EU to be more informative than AU, and we therefore say that there is entanglement when AU performance exceeds EU performance. For OOD detection, we designate the final three classes of each dataset as OOD and the remaining classes as in-distribution (ID). We use EU to score OOD examples and report the area under the receiver operating characteristic curve (AUROC).

We evaluate on CIFAR-10~\citep{cifar10} and MNIST~\citep{mnist} using EfficientNet~\citep{efficientnet} models B0 (4.02M parameters), B2 (7.72M), B4 (17.57M), and B6 (40.76M). We estimate uncertainty using either deep ensembles~\citep{lakshminarayanan2017deepensembles} or Monte Carlo (MC) dropout~\citep{gal2016dropout}. For MC dropout, we enable dropout at test time and insert layers with rate $p=0.2$ before each SiLU activation in EfficientNet. All results are shown on validation data using standard splits.

We train all models from scratch using AdamW~\citep{loshchilov2019adamw} without weight decay, gradient clipping at 1.0, batch size 128, and an initial learning rate of 0.001 with cosine decay. Training lasts 200--1200 epochs, depending on model capacity and dataset. These durations were selected based on validation-performance plateaus observed in preliminary experiments.

\subsection{Adherence to the Theoretical Predictions}

We first test whether posterior samples obtained from trained models satisfy the predicted constraints. Figure~\ref{fig:scatter_10C} shows that the predictions respect the infeasible boundary in the $(\text{AU},\text{EU})$ plane. Many examples also lie near the origin, indicating low estimated uncertainty in both components.

\begin{figure}[t]
    \centering
    \includegraphics[width=0.5\textwidth]{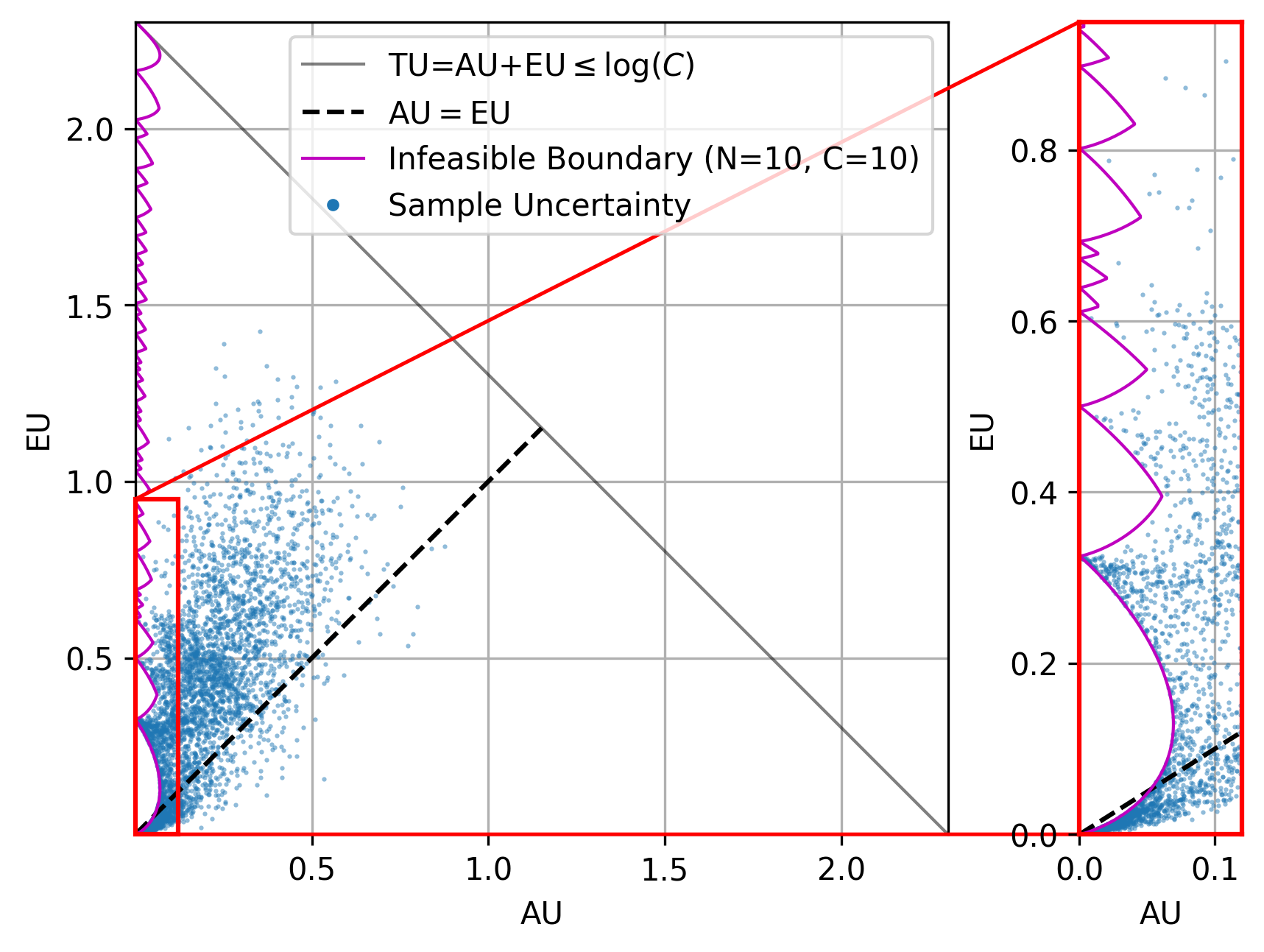}
    \caption{Visualizations of the ($\text{AU}$, $\text{EU}$) space along with points from an ensemble of 10 EfficientNet-B0 networks trained on CIFAR-10.}
    \label{fig:scatter_10C}
\end{figure}

\subsection{Varying the Posterior Sample Count}

Increasing the posterior sample count ($N$) improves OOD detection performance (Figure~\ref{fig:perf_ENS50}) for both AU and EU, but no clear trend is observed in how entangled the measures are. The relative EU fraction, $\text{EU}/\text{TU}$, also increases with $N$, counteracting the effects of epistemic collapse. This trend must be interpreted in light of finite-sample bias: for independent predictions sampled from a fixed distribution over networks, the plug-in EU estimate is downward biased, with bias decreasing as $N$ increases. We therefore report jackknife-corrected estimates in Figure~\ref{fig:perf_ENS50}~\citep{jackknife}. Appendix~\ref{app:jackknife} describes the correction and its limitations.

Figure~\ref{fig:scatter_vs_N} shows that fewer examples concentrate near the origin as $N$ increases. By Theorem~\ref{thm:au_infeasible_boundary}, AU along the infeasible boundary is bounded by $\log(2)/N$. Thus, the boundary's extent in the AU direction decreases as $N$ increases.

\begin{figure}[t]
    \centering
    \includegraphics[width=0.9\textwidth]{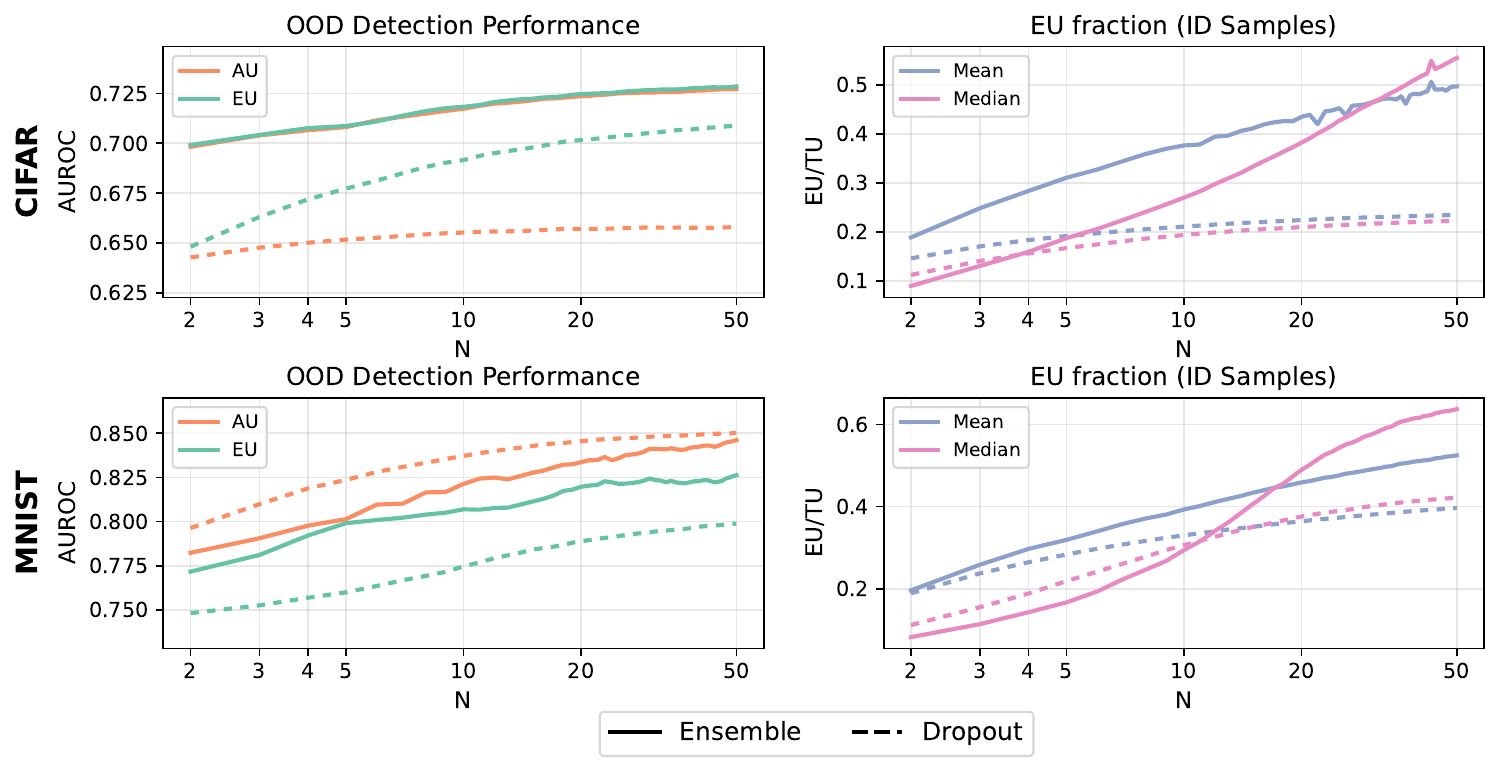}
    \caption{OOD detection performance and epistemic collapse as the number of Enet-B0 networks (N) is varied.}
    \label{fig:perf_ENS50}
\end{figure}

\begin{figure}[t]
    \centering
    \includegraphics[width=1.0\textwidth]{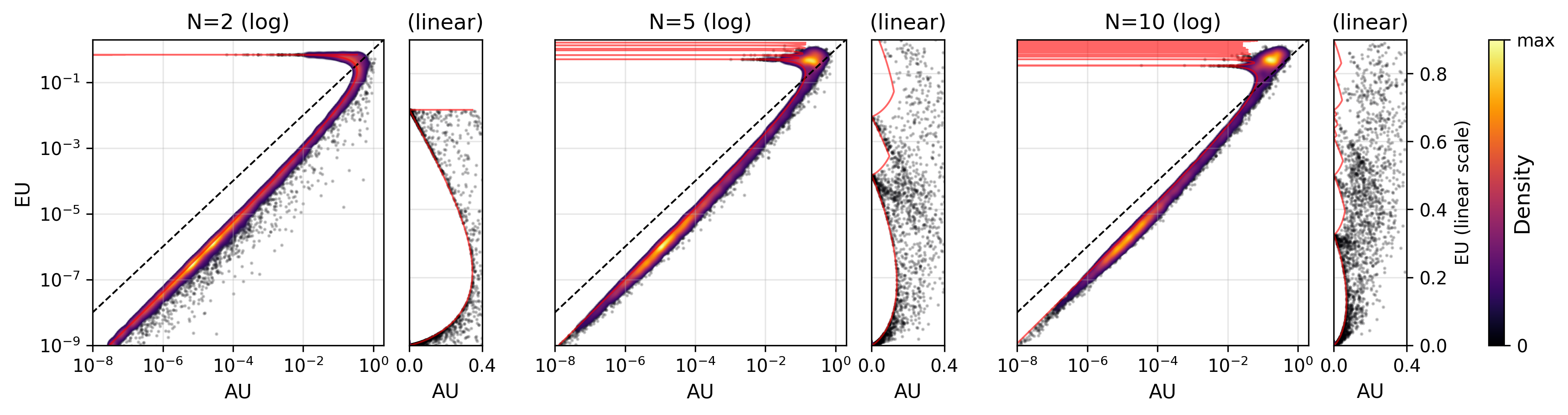} 
    \caption{Visualizations of the ($\text{AU}$, $\text{EU}$) space along with points from an Enet-B4 ensemble trained on CIFAR as the number of networks (N) is varied. The infeasible boundary is shown in red, and the dashed line is $\text{AU}=\text{EU}$.}
    \label{fig:scatter_vs_N}
\end{figure}

\subsection{Varying the Network Capacity}
As model capacity increases, both AU and EU decrease, while accuracy and OOD detection performance improve (Figure~\ref{fig:perf_BASE}). The relative EU fraction, $\text{EU}/\text{TU}$, also declines, reaching approximately $20\%$ for the largest models. This trend is consistent with reports of increased epistemic collapse at larger model sizes~\citep{fellaji2024epistemic}. Entanglement is also clearly visible as AU often exceeds EU in OOD detection performance.

Figure~\ref{fig:scatter_vs_b} shows the $(\text{AU},\text{EU})$ distributions on logarithmic axes. As capacity increases, more examples approach the origin. The first-bubble boundary (Theorem~\ref{thm:first_bubble}) constrains the attainable values in this region, ensuring $\text{AU}>\text{EU}$. The highly confident points therefore attain low EU relative to AU.

\begin{figure}[t]
    \centering
    \includegraphics[width=0.8\textwidth]{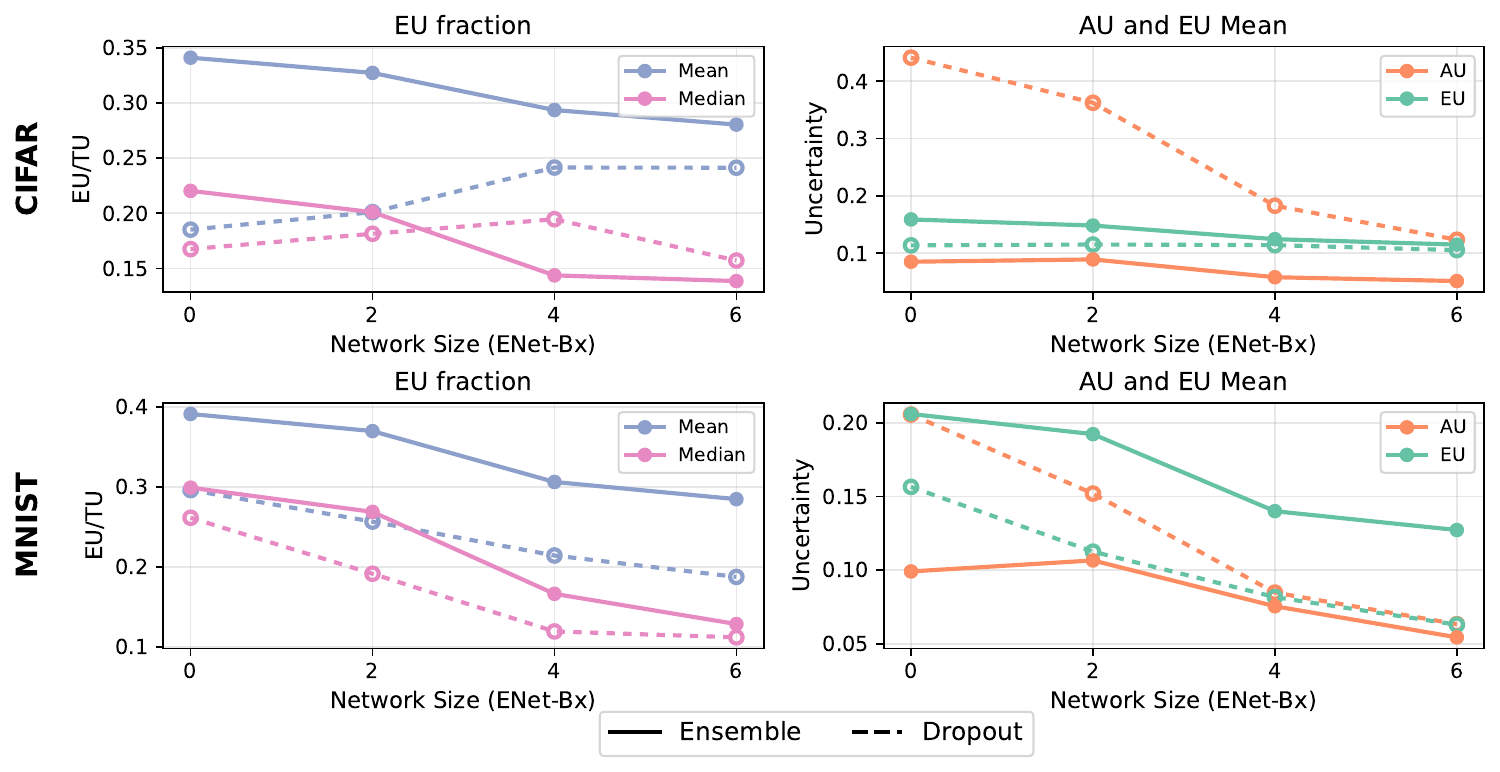}
    \caption{OOD detection performance and epistemic collapse as the network capacity is varied.}
    \label{fig:perf_BASE}
\end{figure}
\begin{figure}[t]
    \centering
    \includegraphics[width=1.0\textwidth]{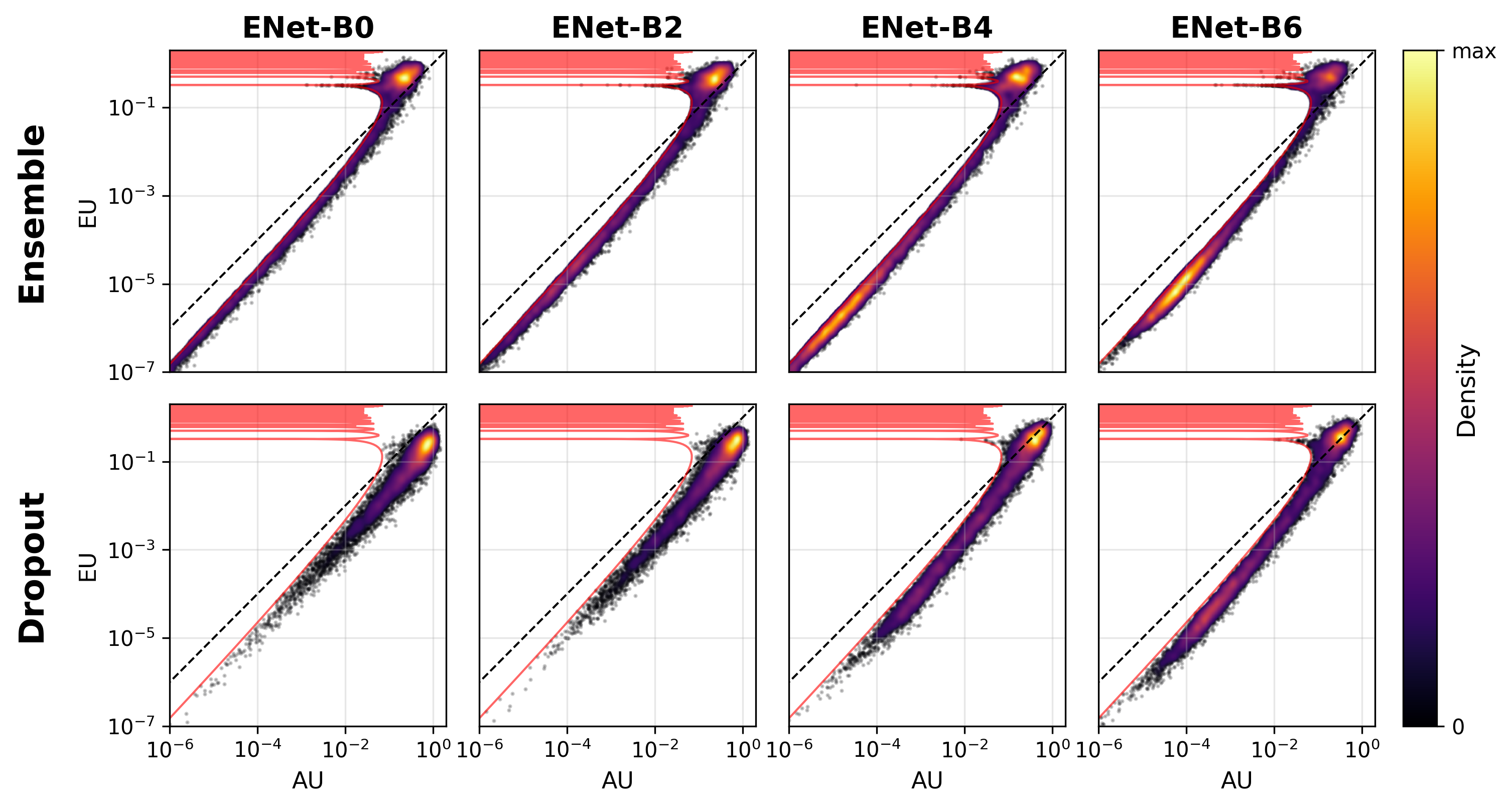}
    \caption{Visualizations of the ($\text{AU}$, $\text{EU}$) space along with points from an ensemble/dropout model as the network capacity is varied. The model is trained on CIFAR. The infeasible boundary is shown in red, and the dashed line is $\text{AU}=\text{EU}$.}
    \label{fig:scatter_vs_b}
\end{figure}

\section{Discussion}
\label{sec:discussion}
\label{sec:discussion_content}

The information-theoretic decomposition imposes structural constraints on the AU and EU pairs attainable from a finite set of posterior samples. Beyond the broad entropy bounds, an infeasible region remains in low-AU regimes. The first boundary curve severely impacts attainable combinations, restricting them to $\text{AU}>\text{EU}$ in this regime. Thus, AU and EU cannot be interpreted as independently varying coordinates for finite-sample predictions here.

These constraints provide one perspective on epistemic collapse. As is well-documented in the literature, larger models produce more highly confident predictions~\cite{guo2017calibration}. Such predictions accumulate near the first infeasible boundary, where the geometry restricts the relative sizes of AU and EU. The boundary therefore offers a structural contribution to the observed decrease in the EU fraction as model capacity grows, although it is not the only cause of collapse. In our experiments, the number posterior samples ($N$) in the ensemble also plays a role: increasing the ensemble size reduces epistemic collapse. Our derived bound $\text{AU}\leq\log(2)/N$ (Theorem~\ref{thm:au_infeasible_boundary}) helps explain why. More generally, increasing the posterior sample count can lessen this particular geometric limitation, although it also incurs additional computational cost.

Our analysis is limited to the information-theoretic decomposition and does not generalize to other frameworks such as EPKL (Expected Pairwise Kullback-Leibler)~\citep{schweighofer2023epkl}. We do not provide a method to eliminate the boundary effect at fixed posterior sample count. Future work could investigate alternative uncertainty frameworks, develop methods for interpreting or correcting finite-sample estimates, and test whether tiered strategies (e.g., ensembles of SWAG (Stochastic Weight Averaging Gaussian)~\citep{maddox2019swag} or dropout models~\citep{gal2016dropout}) can approach the performance of larger ensembles at lower computational cost.

\section{Conclusion}
\label{sec:conclusion}
\label{sec:conclusion_content}

We characterize structural limits on the information-theoretic decomposition of predictive uncertainty for finite sets of posterior samples. The attainable AU--EU region has an infeasible boundary that depends on posterior sample count and class count. Simulations and experiments on CIFAR-10 and MNIST show how this geometry relates to observed uncertainty patterns, including changes in epistemic uncertainty with model capacity. These findings motivate interpreting information-theoretic estimates in light of finite-sample constraints. Extending the boundary analysis beyond the binary-class proof and developing practical mitigation strategies remain open directions.

\newpage
\FloatBarrier
\bibliographystyle{iclr2027_conference}
\bibliography{iclr2027_conference}

\newpage

\appendix
\section{Logit Simulation}
\label{app:logit_simulation}

We generate logits with Gaussian noise and introduce dependence across posterior samples and classes using correlation parameters $\rho_N$ and $\rho_C$. The resulting probabilities illustrate how the attainable uncertainty region changes with the simulation settings.

\begin{algorithm}[h]
\caption{Simulate Probabilistic Predictions}
\label{alg:simulate_probs}
\begin{algorithmic}
    \Procedure{SimulateProbs}{$N, C, n, \sigma_L, \rho_N, \rho_C$}
        \State \Comment{Initialize logits with independent Gaussian noise}
        \State $\mathbf{L} \gets \mathcal{N}(0, \sigma_L^2) \in \mathbb{R}^{N \times C \times n}$
        \State \Comment{Apply posterior sample-wise correlation}
        \State $\mathbf{L} \gets \rho_N \cdot \mathcal{N}(0, \sigma_L^2)_{1 \times C \times n} + (1 - \rho_N) \cdot \mathbf{L}$
        \State \Comment{Apply class-wise correlation}
        \State $\mathbf{L} \gets \rho_C \cdot \mathcal{N}(0, \sigma_L^2)_{N \times 1 \times n} + (1 - \rho_C) \cdot \mathbf{L}$
        \State \Comment{Convert logits to probabilities}
        \State $\mathbf{P} \gets \text{softmax}(\mathbf{L}, \text{axis}=2)$
        \State \Return $\mathbf{P}$
    \EndProcedure
\end{algorithmic}
\end{algorithm}

\begin{figure}[ht]
    \centering
    \includegraphics[width=0.9\textwidth]{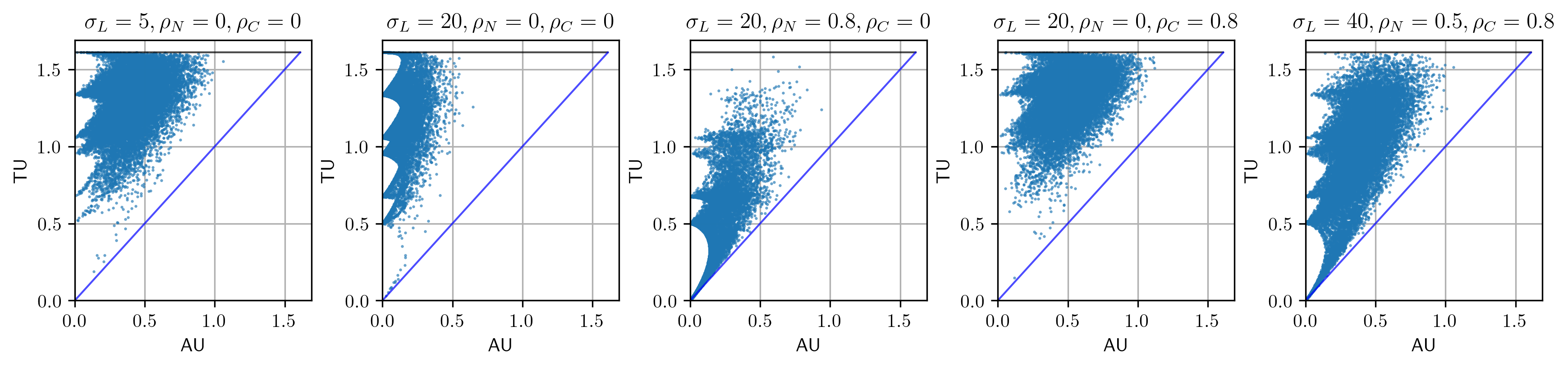}
    \caption{Simulated $(\text{AU},\text{TU})$ values for different settings with $C=N=5$.}
    \label{fig:triangles0}
\end{figure}

\section{Finite-Sample Bias and Jackknife Correction}
\label{app:jackknife}

The finite-sample estimator of epistemic uncertainty (EU) is biased downward because it uses the entropy of the sample-mean prediction. This appendix characterizes that bias and describes the jackknife correction used in our experiments.

For a fixed input, let $p_1,\ldots,p_N$ be independent posterior samples from the same predictive distribution, and define
\begin{equation}
q_N=\frac{1}{N}\sum_{i=1}^{N}p_i,
\qquad
q_\infty=\mathbb{E}_{\theta}[p_\theta].
\end{equation}
The usual estimator can be written as an average divergence from the sample mean:
\begin{equation}
\widehat{\mathrm{EU}}_N
=
H(q_N)-\frac{1}{N}\sum_{i=1}^{N}H(p_i)
=
\frac{1}{N}\sum_{i=1}^{N}
D_{\mathrm{KL}}(p_i\Vert q_N).
\end{equation}
Its population counterpart is
$\mathrm{EU}_\infty
=H(q_\infty)-\mathbb{E}_{\theta}[H(p_\theta)]$.
Although the sample mean $q_N$ and the AU estimator are unbiased, the concavity of entropy makes $H(q_N)$, and consequently the EU estimator, downward biased. In particular,
\begin{equation}
\mathrm{EU}_\infty
-
\mathbb{E}[\widehat{\mathrm{EU}}_N]
=
\mathbb{E}\!\left[
    D_{\mathrm{KL}}(q_N\Vert q_\infty)
\right]
\geq 0,
\end{equation}
where expectations are over posterior samples. Under these assumptions, the expected EU estimate is nondecreasing with $N$ and converges to $\mathrm{EU}_\infty$. Thus, EU can increase with posterior sample count even when the underlying predictive distribution remains fixed.

Unlike squared Euclidean distance, KL divergence does not admit a universal multiplicative correction of the form $N/(N-1)$. We instead use the jackknife bias correction of \citet{jackknife}. For $N\geq2$, define the leave-one-out estimates
\begin{equation}
q_{-i}=\frac{1}{N-1}\sum_{j\neq i}p_j,
\qquad
\widehat{\mathrm{EU}}_{-i}
=
H(q_{-i})
-
\frac{1}{N-1}\sum_{j\neq i}H(p_j).
\end{equation}
The corrected estimator is
\begin{equation}
\widehat{\mathrm{EU}}_{\mathrm{JK}}
=
N\widehat{\mathrm{EU}}_N
-
\frac{N-1}{N}
\sum_{i=1}^{N}\widehat{\mathrm{EU}}_{-i}.
\end{equation}
Each leave-one-out estimate measures sensitivity to sample size. If the expected uncorrected estimator has a smooth expansion in powers of $1/N$, its leading bias term is proportional to $1/N$. The jackknife combination cancels this term because
$N(1/N)-(N-1)(1/(N-1))=0$.
Applying the same construction to the uncorrected sample variance recovers the usual $1/(N-1)$ correction exactly.

For EU, the AU terms in the leave-one-out average equal the full-sample AU estimate. Consequently, the correction acts only on TU:
\begin{equation}
\widehat{\mathrm{TU}}_{\mathrm{JK}}
=
N H(q_N)
-
\frac{N-1}{N}\sum_{i=1}^{N}H(q_{-i}),
\qquad
\widehat{\mathrm{EU}}_{\mathrm{JK}}
=
\widehat{\mathrm{TU}}_{\mathrm{JK}}
-
\widehat{\mathrm{AU}}_N.
\end{equation}
No additional model evaluations are required.

The correction is approximate and need not remove bias for small sample counts, particularly when predictions lie near the simplex boundary. Here, it reduces finite-sample bias when comparing trends across $N$. Because it targets population uncertainty rather than uncertainty of the finite empirical sample, the main-text finite-sample bounds do not directly apply to corrected estimates. Separate corrections to TU and EU also do not guarantee an unbiased estimate of their ratio $\mathrm{EU}/\mathrm{TU}$.

We illustrate the effect of the jackknife correction on EU estimates for different posterior sample counts $N$ in Figures~\ref{fig:N_sweep_jackknife_comparison_cifar} and \ref{fig:N_sweep_jackknife_comparison_mnist}. It is evident that the correction reduces finite-sample bias, as the mean EU curves are mostly flat after correction. When $N<C$ the $\text{EU}$ still increases with $N$ due to the second inequality in Theorem~\ref{th:au_tu_bounds}, which limits the maximum achievable EU to $\log(\min(C, N))$.

\begin{figure}[ht]
\centering
\includegraphics[width=1.0\textwidth]{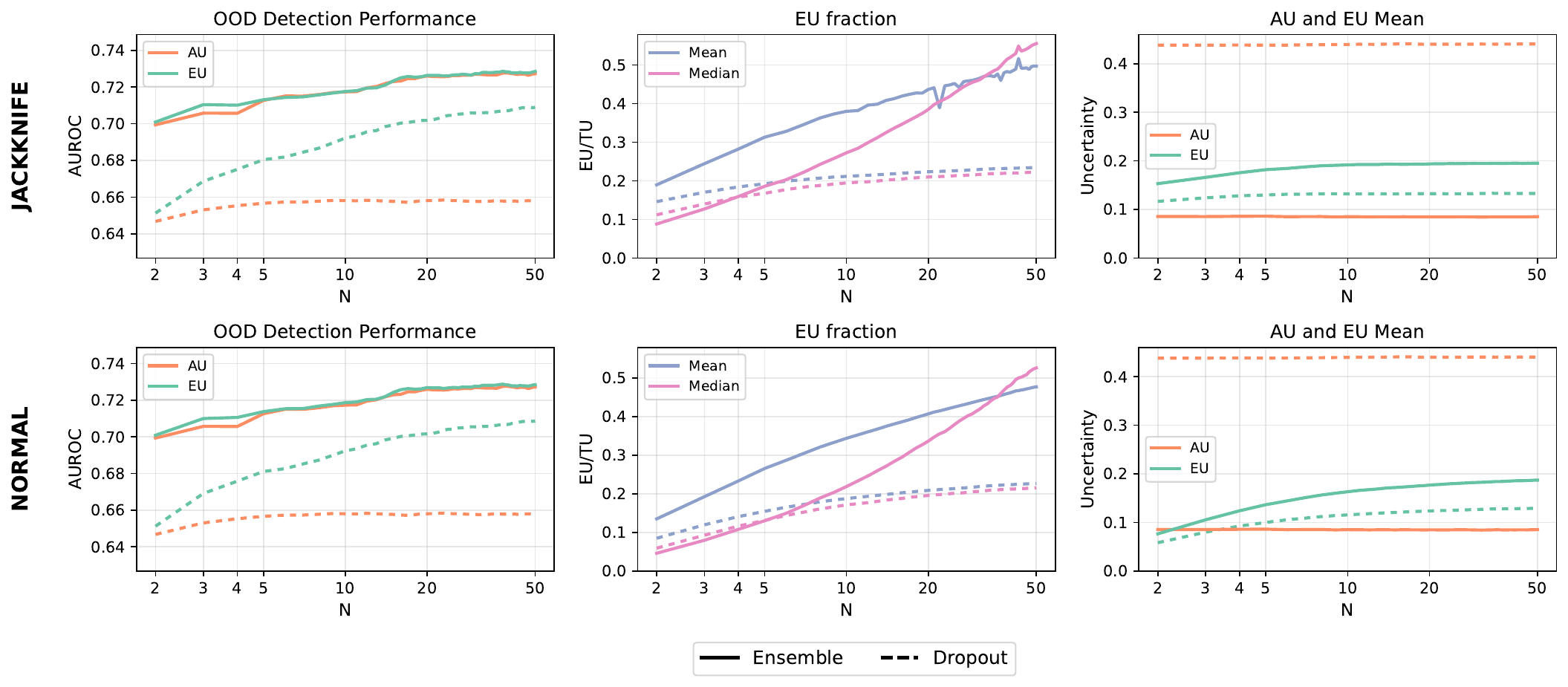}
\caption{Comparison of jackknife-corrected EU estimates for different posterior sample counts $N$ on CIFAR.}
\label{fig:N_sweep_jackknife_comparison_cifar}
\end{figure}

\begin{figure}[ht]
\centering
\includegraphics[width=1.0\textwidth]{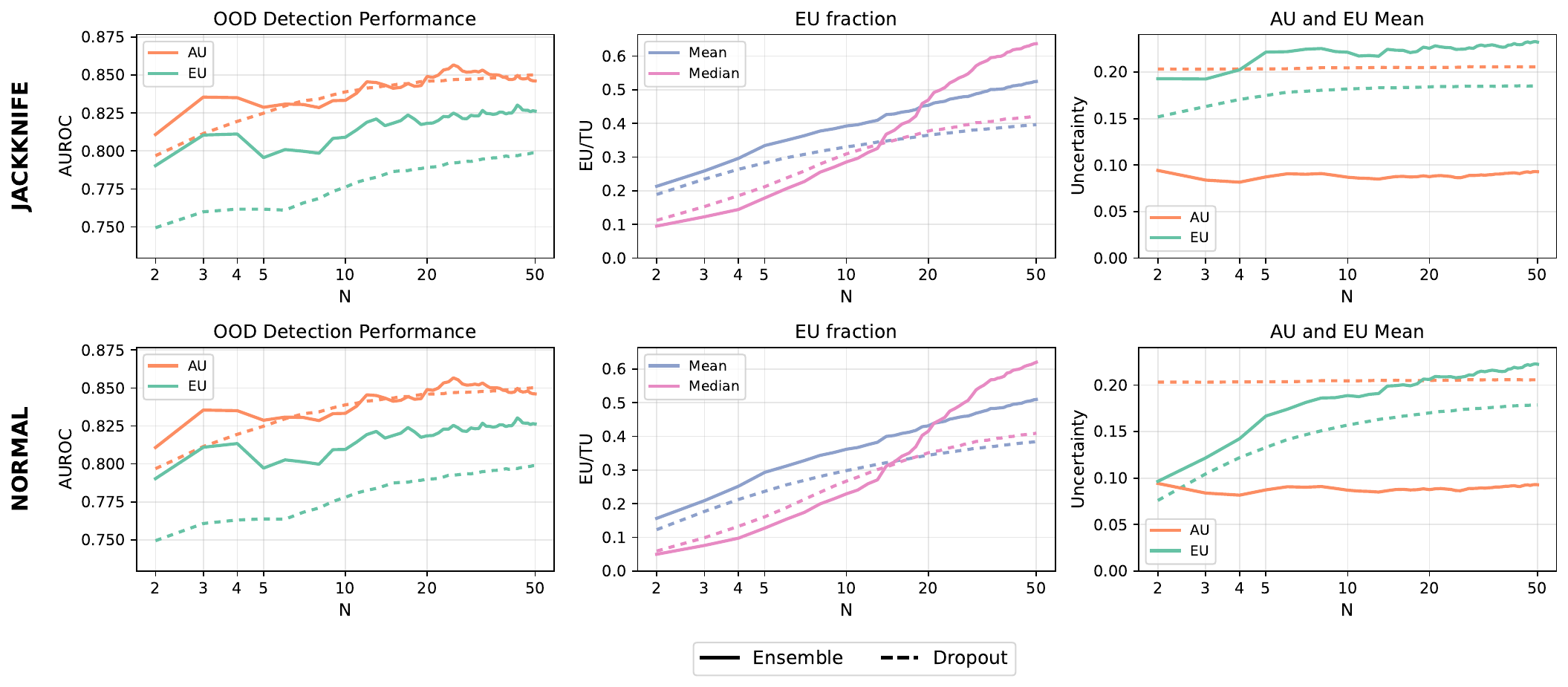}
\caption{Comparison of jackknife-corrected EU estimates for different posterior sample counts $N$ on MNIST.}
\label{fig:N_sweep_jackknife_comparison_mnist}
\end{figure}

\section{Boundary Curve Minimum Lemma}
\label{app:lemma}

\begin{lemma}[Endpoint minimum of boundary curves]
For any boundary curve $(\text{AU}(s),\text{TU}(s))$, the minimum TU occurs at an endpoint.
\end{lemma}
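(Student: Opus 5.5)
Each boundary curve comes from a configuration in $\mathcal{D}$: $N-1$ posterior samples are deterministic and one sample $p=(1-s)e_a+s e_b$ lies on the edge between two vertices $e_a,e_b$, with $s\in[0,1]$. The plan is to show that along such a curve $\text{TU}(s)$ is a concave function of $s$. A concave function on a closed interval attains its minimum at an endpoint, which gives the lemma directly. The endpoints $s=0$ and $s=1$ are exactly the two fully deterministic count vectors that the curve connects.

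Concretely, I will write the mean prediction along the curve. Let $n\in\mathbb{Z}_{\ge 0}^C$ be the class counts of the $N-1$ deterministic samples. Then
\begin{equation}
q(s)=\frac{1}{N}\bigl(n+(1-s)e_a+s e_b\bigr).
\end{equation}
This is an affine function of $s$, and it stays inside $\Delta^{C-1}$. Shannon entropy $H$ is concave on the simplex, and a concave function composed with an affine map is concave. So $\text{TU}(s)=H(q(s))$ is concave on $[0,1]$.

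Hence for every $s\in[0,1]$ we have $\text{TU}(s)\ge (1-s)\text{TU}(0)+s\,\text{TU}(1)\ge\min(\text{TU}(0),\text{TU}(1))$. Only the two coordinates $a$ and $b$ of $q$ vary, so for a direct check I can compute
\begin{equation}
\text{TU}''(s)=-\frac{1}{N^2}\left(\frac{1}{q_a(s)}+\frac{1}{q_b(s)}\right)<0
\end{equation}
on the interior. This even gives strict concavity, so interior points have TU strictly above the smaller endpoint value. That strict version is what Theorem~\ref{thm:first_bubble} uses when it claims the interior TU values of the first bubble are not attained by other curves.

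The main subtlety is not the concavity itself but making sure the lemma covers everything called a "boundary curve". I will argue that every curve in the lower envelope is parametrized as above, by Conjecture~\ref{conj:infeasible_boundary} and Definition~\ref{def:probability_simplex}. I also need to handle the endpoint cases $q_a(s)=0$ or $q_b(s)=0$, where the second derivative blows up. There I would rely on continuity of $H$ on the closed simplex rather than on the derivative formula. AU does not enter the argument, since the lemma concerns only the TU coordinate.
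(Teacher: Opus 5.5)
Your proof is correct and is essentially the paper's own argument: $q(s)$ is affine in $s$ with only coordinates $a,b$ varying, so $\text{TU}(s)$ is concave and its minimum on $[0,1]$ lies at an endpoint; your explicit $\text{TU}''$ computation is a correct extra check. One nit on your aside: Theorem~\ref{thm:first_bubble} does not rely on strict concavity, because it only needs the non-strict lemma applied to the other curves together with the fact that the first bubble's interior TU lies strictly below $\text{TU}(q^{(N-1)})$ (there $\text{TU}(s)=h(s/N)$ with $s/N\le 1/2$, which is increasing in $s$).
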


\textbf{Proof.} Let $\alpha\neq\beta$ index the two components of $q$ that vary with $s$. Then
\begin{equation}
    \text{TU}(s) = H(q) = -\sum_{i=1}^{C} q_i \log(q_i) = \underbrace{-\sum_{i\neq \alpha, \beta} q_i \log(q_i)}_{\text{constant in $s$}} - q_\alpha \log(q_\alpha) - q_\beta \log(q_\beta)
\end{equation}
Since $f(x)=-x\log x$ is concave and $q_\alpha,q_\beta$ are linear in $s$, TU is concave in $s$. A concave function on an interval attains its minimum at an endpoint, so the minimum is either $\text{TU}(0)$ or $\text{TU}(1)$. \qed

\end{document}